\documentclass{modelica}
\hypersetup{%
  pdftitle  = {A 2-DOF Multibody Track Joint for Race Car Optimization},
  pdfauthor = {TrackJoint authors},
  pdfsubject = {North American Modelica and FMI Conference 2026},
  pdfkeywords = {Modelica, MultiBody, FMI, race car, track model, Monge patch, kinematic joint, d'Alembert, optimal control},
  colorlinks,
  linkcolor=black,
  urlcolor=black,
  citecolor=black,
  pdfpagelayout = SinglePage,
  pdfcreator = pdflatex,
  pdfproducer = pdflatex}

\usepackage[utf8]{inputenc}
\usepackage{amsmath,amssymb}
\usepackage{amsthm}
\usepackage{bm}

\newtheorem{proposition}{Proposition}

\begin{document}
\thispagestyle{empty}

\title{A 2-DOF MultiBody Track Joint for Race Car Optimization}
\author[1]{Srikanth Sivaramakrishnan}
\author[1]{Behzad Danaei}
\author[1]{Adwait Verulkar}
\author[1]{Ryan Trinter}
\affil[1]{General Motors, USA, {\small\texttt{\{srikanth.sivaramakrishnan, behzad.danaei, adwait.verulkar, ryan.trinter\}@gm.com}}}

\maketitle\thispagestyle{empty}

\abstract{%
\texttt{TrackJoint} is a Modelica MultiBody component that turns a
three-dimensional road surface into a joint with two degrees of freedom.
The two joint coordinates are the distance travelled along a reference
line and the sideways offset from that line. Together they give the
position, orientation and angular velocity of a single \texttt{frame\_b}
connector. All road geometry reaches the joint as a small set of periodic
piecewise-quintic splines, so one component can represent any circuit whose
splines have been fitted beforehand. The road constraint is closed with
d'Alembert's principle. The two generalized forces that
belong to the joint coordinates are set to zero, so no Lagrange multipliers are needed.
The joint is connected to a multibody race car in Dymola and the pair is
exported as one FMI~3.0 FMU. The same FMU is then used for time-domain
simulation and for CasADi/IPOPT minimum lap-time optimization, with no
change to the model in between. This works because the joint keeps the
arc-length coordinate as a real state and publishes its time derivative,
which is the quantity that lets the problem be rewritten with distance
We report results for minimum lap time simulations on
the Singapore Formula~1 circuit.
}

\noindent\emph{Keywords: Modelica, MultiBody, Functional Mockup Unit, 
race car simulation,minimum lap time, track model, Monge patch, kinematic joint, d'Alembert principle,
optimal control, optimization, CasADi, FMIOPT}

\section{Introduction}

Race-car simulation in Modelica relies on a road model
that generates accurate surface geometry in terms of position,
orientation and angular velocity for each tire contact patch.
Inside a generic MultiBody vehicle simulation this is naturally expressed by
a Modelica \emph{joint} that constrains the chassis of the vehicle
to lie on the road surface while leaving the in-plane road
coordinates free. For ideal flat surfaces this role is filled by
\texttt{Modelica.\allowbreak Mechanics.\allowbreak MultiBody.\allowbreak Joints.\allowbreak Planar}. Although analytically defined ribbons have been used in earlier
race-car work \cite{PerantoniLimebeer:2015,Lovato:2021}, they are
typically hand-coded inside a surrogate optimization model rather
than packaged as a reusable MultiBody component. The idea of a
reusable path-following joint that constrains motion to a
spline-parametrized curve has been formulated symbolically in a
graph-theoretic multibody setting by \textcite{Hall:2014}. 
This method is used to reduce complex automotive suspension linkages into a single degree-of-freedom joint for computational efficiency. 
A more closely related precedent is the
\texttt{Claytex.\allowbreak Mechanics.\allowbreak MultiBody.\allowbreak Joints.\allowbreak Rail}
joint of the commercial Claytex library \cite{Briant:2021}, which constrains a
\texttt{frame\_b} to follow a one-dimensional B-spline rail
parametrized by arc length $s$ and resolves the wrench through
the spline-derivative projection that we also adopt below. The
Claytex component is, however, restricted to a curve: it leaves a
single translational degree of freedom and exposes the orientation
either as a heading frame slaved to the rail tangent or as three
independent angle splines.

This paper presents \texttt{TrackJoint}, a Modelica MultiBody joint
that takes the Claytex \texttt{Rail} idea as its starting point
and extends it from a 1-DOF rail to a 2-DOF Monge-patch racetrack
surface. The body is now free to move both along a planar
reference line ($s$) and laterally across it ($w$); the
orientation triad is constructed analytically from the surface
partials rather than from a separate angle table; and the wrench
is closed by two d'Alembert relations rather than one. The
underlying surface model is the Monge patch
\cite{LimebeerVanDeMerwe:2025,doCarmo:1976}
\begin{equation}
\bm r_b(s,w)
 = \bm r(s) + w\,\bm n_{\!g}(s) + z(s,w)\,\bm e_z,
\label{eq:patch}
\end{equation}
where $\bm r_b(s,w)$ is the world-frame position of the track point
identified by $(s,w)$, $\bm r(s)=(r_x(s),r_y(s),0)$ is the planar reference line,
$\bm n_{\!g}(s)=(-\partial_s r_y(s),\partial_s r_x(s),0)$ its ground-plane left normal
and
\begin{equation}
z(s,w) = r_z(s) + z_1(s)\,w + z_2(s)\,w^2 + z_3(s)\,w^3
\label{eq:zexp}
\end{equation}
the lateral cubic in $w$ proposed by
\textcite{LimebeerVanDeMerwe:2025} for elevation. The six channel
splines $\{r_x,r_y,r_z,z_1,z_2,z_3\}$ are obtained from a separate
fitting pipeline that processes a measured racetrack point cloud and
exports them as periodic piecewise-quintic ($C^4$ at the knots)
B-splines anchored on the minimum-curvature line of the circuit
\cite{Verulkar:2026fit,Heilmeier:2020,Braghin:2008}. Here
$(\bm e_x,\bm e_y,\bm e_z)$ denotes the fixed world Cartesian basis.
The present paper takes those splines as given and concentrates on the Modelica
joint that consumes them.

The contribution of this paper is a Modelica implementation of a
track joint that exposes the road surface as a single reusable
MultiBody component suitable for both time-domain simulation and
optimization. For efficiency the per-step evaluation is
implemented with Horner expansion of the Monge cubic, constant-time
segment lookup on the uniform break vector and a shared spline
evaluator, so that the same component is cheap to evaluate at every collocation point of a CasADi/IPOPT
\cite{Andersson:2019,WaechterBiegler:2006} minimum lap-time
problem. The model also ensures sufficient accuracy to the LiDAR measurements of the track and can be used in time-domain simulation of a
multibody race car driven around a closed circuit.

The remainder of the paper is organized as follows.
Section~\ref{sec:surface} recalls the Monge-patch surface
formulation and fixes the notation used by the joint.
Section~\ref{sec:joint} derives the joint equations from
\eqref{eq:patch}--\eqref{eq:zexp}.
Section~\ref{sec:implementation} discusses the Modelica
implementation and the FMI export. Section~\ref{sec:results}
reports the two downstream use cases.

\section{Surface model}\label{sec:surface}

The road is modeled as a smooth orientable two-dimensional manifold
$\mathcal M\subset\mathbb R^3$ parametrized by~\eqref{eq:patch}. We
reproduce here only the relations needed by the Modelica
implementation. A full geometric discussion is given in
\cite{LimebeerVanDeMerwe:2025} and the companion fitting paper
\cite{Verulkar:2026fit}.

Differentiating the surface position once gives the in-surface
tangent vectors
\begin{align}
\partial_s\bm r_b
 &= \bigl(\partial_s r_x-w\,\partial_{ss} r_y\bigr)\bm e_x
   + \bigl(\partial_s r_y+w\,\partial_{ss} r_x\bigr)\bm e_y
   + \partial_s z\,\bm e_z,
\label{eq:xPs}\\[2pt]
\partial_w\bm r_b
 &= -\partial_s r_y\,\bm e_x + \partial_s r_x\,\bm e_y + \partial_w z\,\bm e_z,
\label{eq:xPw}
\end{align}
in which the lateral derivatives $\partial_s z$ and $\partial_w z$ follow from
the cubic~\eqref{eq:zexp},
\begin{align}
\partial_s z &= \partial_s r_z + \partial_s z_1\,w + \partial_s z_2\,w^2 + \partial_s z_3\,w^3,\\
\partial_w z &= z_1 + 2\,z_2\,w + 3\,z_3\,w^2.
\end{align}
The surface unit normal is
\begin{equation}
\bm n_z
 = \frac{\partial_s\bm r_b\times\partial_w\bm r_b}
   {\Vert\partial_s\bm r_b\times\partial_w\bm r_b\Vert},
\label{eq:enP}
\end{equation}
and a right-handed surface triad is
$(\bm n_x,\bm n_y,\bm n_z)
 =\bigl(\partial_s\bm r_b/\Vert\partial_s\bm r_b\Vert,\;
   \bm n_z\times\partial_s\bm r_b/\Vert\partial_s\bm r_b\Vert,\;
        \bm n_z\bigr)$,
identical to that adopted in \cite{LimebeerVanDeMerwe:2025}.

The constraint that keeps the attached body on $\mathcal M$ leaves two
degrees of freedom, namely the parameters $s$ and $w$. The Modelica joint
carries $(s,w)$ and their first time derivatives
$(\dot s,\dot w)$ as states, and works everything else out from the
splines $\{r_x,r_y,r_z,z_1,z_2,z_3\}$ at the current $s$.

\subsection{Regularity of the parametrization}\label{sec:regularity}

The joint only works where the patch \eqref{eq:patch} is locally
invertible. That puts a condition on the reference line. Let
$\kappa(s)$ be the signed planar curvature of $\bm r(s)$. The cross product
of the in-surface tangents \eqref{eq:xPs}--\eqref{eq:xPw} then has magnitude
\begin{equation}
\Vert\partial_s\bm r_b\times\partial_w\bm r_b\Vert
 \;=\;\bigl|1-\kappa(s)\,w\bigr|\,\sqrt{1+\Vert\nabla z\Vert^2}
 \;+\;\mathcal O(\partial_s z),
\label{eq:jac}
\end{equation}
so the patch breaks down on the fold
\begin{equation}
w_{\mathrm{fold}}(s)=\frac{1}{\kappa(s)},
\label{eq:fold}
\end{equation}
which is the centre of curvature of the reference line. Suppose the
reference line is the geometric centre-line. In any corner whose radius of
curvature drops below the local half-width $b(s)$ of the road we get
$|w_{\mathrm{fold}}(s)|<b(s)$, so the fold sits \emph{inside} the drivable
strip. There the map $(s,w)\mapsto\bm r_b$ stops being one-to-one, and the
joint hands back two different points for the same $(s,w)$ along with a
Jacobian that goes to zero. This happens in the slow corners, which are
the ones that decide lap time. Anchoring the splines on the
minimum-curvature line lowers the curvature peaks and moves the fold
outside the strip. The joint therefore needs its input data to satisfy
\begin{equation}
\min_{s}\ \frac{1}{|\kappa(s)|\,b(s)} \;>\; 1 ,
\label{eq:injectivity}
\end{equation}
which we check for every circuit used in this paper in
Section~\ref{sec:benchmarks}.

\subsection{Continuity required of the splines}\label{sec:continuity}

The joint needs high derivatives of its inputs. The rotation rates in
Section~\ref{sec:rates} use the third $s$-derivative of the planar
channels $r_x$ and $r_y$. The Modelica tool then differentiates the
angular-velocity equation once more while it reduces the DAE index, which
takes the requirement up to a fourth derivative. A piecewise quintic with
$C^4$ continuity at the interior knots and at the wrap is the lowest degree
that keeps all of these continuous across a knot. A cubic spline leaves
$\partial_{ss}\bm r_b$ jumping at every knot, which makes the integrator
chatter at the segment boundaries. That is why the fitting pipeline
\cite{Verulkar:2026fit} exports quintics even though cubics would be enough
for the surface on its own.

\section{Joint formulation}\label{sec:joint}

\begin{figure}[t]
  \centering
  \includegraphics[width=0.48\textwidth]{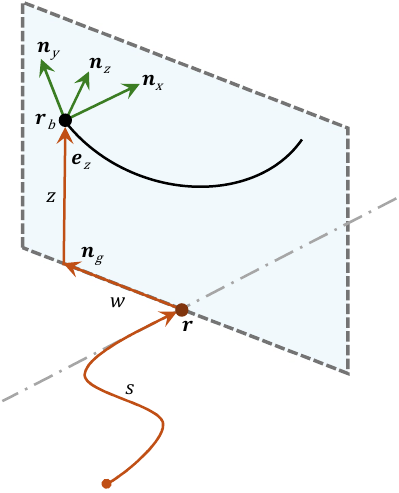}
  \caption{Schematic of the \texttt{TrackJoint}. The body
  \texttt{frame\_b} is constrained to the Monge-patch surface
  $\mathcal M$ and is free to move along the reference line
  ($s$) and laterally across it ($w$). At the current $(s,w)$ the
  joint evaluates the six channel splines
  $\{r_x,r_y,r_z,z_1,z_2,z_3\}$ to build the surface position
  $\bm r_b(s,w)$ and the right-handed surface triad
  $(\bm n_x,\bm n_y,\bm n_z)$, whose tangent $\bm n_x$ follows the
  reference line and whose normal $\bm n_z$ is the surface
  normal. The reaction wrench is resolved onto the two free
  directions through the d'Alembert projection.}
  \label{fig:trackjoint}
\end{figure}

Figure~\ref{fig:trackjoint} illustrates the overall working
principle of the joint. The body \texttt{frame\_b} is pinned to the
road surface $\mathcal M$ and retains exactly the two in-plane
degrees of freedom $(s,w)$. From these coordinates the joint
reconstructs the full pose and angular velocity of the connector and
projects the constraint forces onto the surface, as detailed in the
subsections below.

\subsection{States and frame outputs}

The joint has two free coordinates $\bm q=(s,w)$ with derivatives
$\bm{\dot q}=(\dot s,\dot w)$. Their values at any instant determine
the position, orientation and angular velocity of the
\texttt{frame\_b} connector. The position is given directly by
\eqref{eq:patch} as $\bm r_b(s,w)$, while orientation and
angular velocity follow
\begin{align}
\mathbf R_b &= [\,\bm n_x\;\;\bm n_y\;\;\bm n_z\,]^{\!\top},
\label{eq:Rb}\\
\bm\omega_b
 &= \mathbf R_b\bigl(\partial_s\bm\phi\,\dot s
                    +\partial_w\bm\phi\,\dot w\bigr),
\label{eq:omegab}
\end{align}
where $\partial_s\bm\phi,\partial_w\bm\phi\in\mathbb R^3$ are the parametric
rotation-rate vectors of the surface triad with respect to $s$ and
$w$, and $\mathbf R_b$ is the rotation matrix associated with
that frame; and $\bm\omega_b$ is its world-frame angular velocity.
Equation~\eqref{eq:omegab} expresses the world-frame angular
velocity as a linear combination of two parametric angular
velocities, exactly as in the standard MultiBody treatment of any
two-parameter rigid kinematic chain.

\subsection{Parametric rotation rates}\label{sec:rates}

The rotation-rate vectors are obtained in closed form from the
surface partials. Let $\bm\xi\in\{s,w\}$ denote either parameter.
Since the triad $\mathbf R_b=[\bm n_x\;\bm n_y\;\bm n_z]^{\!\top}$ is
orthonormal for every $(s,w)$, its parametric derivative is
skew-symmetric, so there exists a unique vector
$\partial_\xi\bm\phi$ with
\begin{equation}
\partial_\xi\bm n_i=\partial_\xi\bm\phi\times\bm n_i,
\qquad i\in\{x,y,z\},
\label{eq:rotdef}
\end{equation}
and $\partial_\xi\bm\phi$ may be recovered from any two columns by
$\partial_\xi\bm\phi=\tfrac12\sum_i\bm n_i\times\partial_\xi\bm n_i$.
Building it from the two pieces that the surface partials already give is
cheaper and better conditioned, and is implemented in this code.

Write $\bm c=\partial_s\bm r_b\times\partial_w\bm r_b$ for the
un-normalized binormal, so that $\bm n_z=\bm c/\Vert\bm c\Vert$.
Differentiating this quotient gives
\begin{equation}
\partial_\xi\bm n_z
 =\frac{\partial_\xi\bm c}{\Vert\bm c\Vert}
  -\frac{(\bm n_z\!\cdot\!\partial_\xi\bm c)}{\Vert\bm c\Vert}\,\bm n_z
 =\frac{1}{\Vert\bm c\Vert}\,
   \bigl(\mathbf I-\bm n_z\bm n_z^{\!\top}\bigr)\partial_\xi\bm c ,
\label{eq:dnz}
\end{equation}
the component of $\partial_\xi\bm c$ tangent to the surface. Compare
\eqref{eq:dnz} with \eqref{eq:rotdef} for $i=z$ and use
$\bm n_z\times(\bm n_z\times\bm v)=-(\mathbf I-\bm n_z\bm n_z^{\!\top})\bm v$.
This identifies the two in-plane components of $\partial_\xi\bm\phi$ as
$\bm n_z\times\partial_\xi\bm c/\Vert\bm c\Vert$. That term turns the triad
about axes lying in the surface, so it is the roll and pitch part. It says
nothing about the component along $\bm n_z$, because turning about
$\bm n_z$ does not move $\bm n_z$.

That remaining component is fixed by the tangent column. From
$\bm n_x=\partial_s\bm r_b/\Vert\partial_s\bm r_b\Vert$ and
\eqref{eq:rotdef} for $i=x$,
\begin{equation}
\partial_\xi\bm\phi\!\cdot\!\bm n_z
 =\bigl(\bm n_x\times\partial_\xi\bm n_x\bigr)\!\cdot\!\bm n_z
 =\frac{(\partial_\xi\partial_s\bm r_b)\!\cdot\!\bm n_y}
        {\Vert\partial_s\bm r_b\Vert},
\label{eq:yawpart}
\end{equation}
where the normalization factor drops out because only the part of
$\partial_\xi\partial_s\bm r_b$ orthogonal to $\bm n_x$ survives the
projection onto $\bm n_y$. This is the turn about the surface normal, so it
is the yaw part. Adding the two contributions and setting $\xi$ to $s$ and
then to $w$ gives
\begin{align}
\partial_s\bm\phi
 &= \frac{\bm n_z\times\partial_s\bm c}{\Vert\bm c\Vert}
     + \frac{(\partial_{ss}\bm r_b)\!\cdot\!\bm n_y}
       {\Vert\partial_s\bm r_b\Vert}\,\bm n_z,
\label{eq:phis}\\[2pt]
\partial_w\bm\phi
 &= \frac{\bm n_z\times\partial_w\bm c}{\Vert\bm c\Vert}
     + \frac{(\partial_{sw}\bm r_b)\!\cdot\!\bm n_y}
       {\Vert\partial_s\bm r_b\Vert}\,\bm n_z.
\label{eq:phiw}
\end{align}
The required derivatives of $\bm c$ follow from the product rule,
\begin{align}
\partial_s\bm c
 &= (\partial_{ss}\bm r_b)\times\partial_w\bm r_b
   + \partial_s\bm r_b\times(\partial_{sw}\bm r_b),\\
\partial_w\bm c
 &= (\partial_{sw}\bm r_b)\times\partial_w\bm r_b
   + \partial_s\bm r_b\times(\partial_{ww}\bm r_b),
\end{align}
which combined with~\eqref{eq:xPs}--\eqref{eq:xPw} only require the
spline values up to the third $s$-derivative ($\partial_{sss} r_x$, $\partial_{sss} r_y$),
the second $s$-derivative of the height channels ($\partial_{ss} r_z,\partial_{ss} z_k$) and
the lateral derivatives of $z(s,w)$ up to second order
($k\in\{1,2,3\}$ for $z_k$).

\subsection{Time-to-space transformation for minimum lap-time problems}\label{sec:timespace}

A minimum lap-time problem is naturally posed with the arc-length
coordinate $s$ rather than physical time $t$ as the independent
variable, so that the integration domain $s\in[0,L]$ is fixed and
the lap time $T=\int_0^L\mathrm dt$ becomes the cost ($L$ is the
track length). The change
of variable rests on the single scalar relation
\begin{equation}
\frac{\mathrm dt}{\mathrm ds} = \frac{1}{\dot s},
\qquad
\dot s = \frac{v_t\cos\chi - v_n\sin\chi}{1-\kappa(s)\,w},
\label{eq:dtds}
\end{equation}
where $(v_t,v_n)$ are the chassis velocity components in the track
triad, $\chi$ the chassis yaw relative to the reference-line
tangent, and $\kappa(s)$ the planar curvature of the reference line.
Every other state derivative $\dot{\bm x}$ in the
vehicle model is then rescaled as
$\mathrm d\bm x/\mathrm ds = \dot{\bm x}/\dot s$ before being
handed to the collocation transcription. The factor $\dot s$ is
therefore the single quantity through which the road geometry
enters the time-to-space transformation.

The present joint exports $\dot s$ as the state variable
\texttt{vsTrack}~$=$~\texttt{der(sTrack)}, declared with
\texttt{StateSelect.always} so that any Modelica tool retains it
as a continuous output of the FMU into which the host vehicle
model and the joint are exported together. The FMIOPT
transcription reads this variable directly from the exported
FMI~3.0 interface and uses it to rescale every other state
derivative without any further model modification. It supplies the one quantity
the time-to-space transformation needs, which makes it a prerequisite for
the minimum lap-time application reported in
Section~\ref{sec:results}.

\subsection{D'Alembert constraint}

Rather than introducing two Lagrange multipliers for the orthogonal
directions to $\mathcal M$, we close the kinematic chain by setting
the two generalized forces conjugate to $(s,w)$ to zero. Resolving
the body-frame wrench $(\bm f_b,\bm\tau_b)$ on the
\texttt{frame\_b} connector to the world orientation,
$\bm f^w_b=\mathbf R_b^{\!\top}\bm f_b,
 \bm\tau^w_b=\mathbf R_b^{\!\top}\bm\tau_b$,
the principle of virtual work gives
\begin{align}
0 &= \bm f^w_b\!\cdot\!\partial_s\bm r_b
    +\bm\tau^w_b\!\cdot\!\partial_s\bm\phi,
\label{eq:dAlembertS}\\
0 &= \bm f^w_b\!\cdot\!\partial_w\bm r_b
    +\bm\tau^w_b\!\cdot\!\partial_w\bm\phi.
\label{eq:dAlembertW}
\end{align}
Equations~\eqref{eq:dAlembertS}--\eqref{eq:dAlembertW} provide the
two algebraic equations needed to close the index-1 DAE in
$(\dot s,\dot w)$ once the vehicle dynamics couple the wrench to the
states. They are linear in the six wrench components
$(\bm f^w_b,\bm\tau^w_b)$ and can be written as
$\mathbf A\,[\bm f^w_b;\bm\tau^w_b]=\bm 0$ with the $2\times6$ matrix
\begin{equation}
\mathbf A=
\begin{bmatrix}
(\partial_s\bm r_b)^{\!\top} & (\partial_s\bm\phi)^{\!\top}\\[2pt]
(\partial_w\bm r_b)^{\!\top} & (\partial_w\bm\phi)^{\!\top}
\end{bmatrix}.
\label{eq:Amat}
\end{equation}
The DAE is solvable when $\operatorname{rank}\mathbf A=2$, and that depends
only on the surface geometry.

\begin{proposition}\label{prop:rank}
$\operatorname{rank}\mathbf A=2$ if and only if
$\partial_s\bm r_b$ and $\partial_w\bm r_b$ are linearly independent,
i.e.\ if and only if
$\Vert\partial_s\bm r_b\times\partial_w\bm r_b\Vert\neq0$.
\end{proposition}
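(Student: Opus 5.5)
The plan is to prove the two directions separately, working with the block structure of $\mathbf A$ in \eqref{eq:Amat}. Write $\mathbf A=[\,\mathbf J_r\;\;\mathbf J_\phi\,]$, where $\mathbf J_r\in\mathbb R^{2\times3}$ has rows $(\partial_s\bm r_b)^{\!\top},(\partial_w\bm r_b)^{\!\top}$ and $\mathbf J_\phi$ has rows $(\partial_s\bm\phi)^{\!\top},(\partial_w\bm\phi)^{\!\top}$.

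First I would record the elementary fact that two vectors in $\mathbb R^3$ are linearly independent if and only if their cross product is nonzero. This follows from $\Vert\bm a\times\bm b\Vert^2=\Vert\bm a\Vert^2\Vert\bm b\Vert^2-(\bm a\cdot\bm b)^2$ together with the equality case of Cauchy--Schwarz. It gives the ``i.e.'' equivalence in the statement and reduces the claim to $\operatorname{rank}\mathbf A=2\iff\bm c\neq\bm 0$, with $\bm c=\partial_s\bm r_b\times\partial_w\bm r_b$ as in Section~\ref{sec:rates}.

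For the ``if'' direction, assume $\bm c\neq\bm 0$. Then $\mathbf J_r$ has two linearly independent rows, so it contains a nonsingular $2\times2$ minor. That minor is also a minor of $\mathbf A$, so $\operatorname{rank}\mathbf A\ge2$. Since $\mathbf A$ has only two rows, the rank is exactly $2$. No property of the rotation-rate vectors is needed here beyond their being finite. This holds because \eqref{eq:phis}--\eqref{eq:phiw} are well defined once $\Vert\bm c\Vert\neq0$ and $\Vert\partial_s\bm r_b\Vert\neq0$, and the latter follows from $\bm c\neq\bm 0$.

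The ``only if'' direction is the delicate step. The obstacle is that a linear dependence of the translational parts alone does not, on its face, force the full $6$-vector rows to be dependent, because the $\partial_\xi\bm\phi$ blocks could in principle break the dependence. The approach is to show that at a point with $\bm c=\bm 0$ the matrix $\mathbf A$ cannot have rank $2$ in the sense relevant to the joint. The key observation is that $\bm n_z=\bm c/\Vert\bm c\Vert$ in \eqref{eq:enP}, and hence the triad $\mathbf R_b$ and both $\partial_\xi\bm\phi$ in \eqref{eq:phis}--\eqref{eq:phiw}, carry the factor $1/\Vert\bm c\Vert$. On the fold \eqref{eq:fold} these quantities are therefore undefined, and they blow up as the fold is approached. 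Consequently $\mathbf A$ is not a well-defined matrix of rank $2$ there: either it is undefined, or, after normalizing its rows, its limit loses rank.

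To make the limiting version precise, I would use the expansion \eqref{eq:jac}. I would show that near $w_{\mathrm{fold}}$ the dominant part of each row is the $\bm c$-derivative term $\bm n_z\times\partial_\xi\bm c/\Vert\bm c\Vert$. After scaling by $\Vert\bm c\Vert$, the two rows tend to vectors whose translational parts vanish. I would then check that the surviving rotational parts are parallel, using $\partial_s\bm r_b\parallel\partial_w\bm r_b$ at the fold. I expect this parallelism check to be the hardest calculation. If it proves too messy, I would fall back on stating the converse as ``$\mathbf A$ is defined with rank $2$ only if $\bm c\neq\bm 0$,'' which follows directly from the definitions.
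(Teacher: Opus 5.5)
Your ``if'' direction matches the paper's: the translational block $[\partial_s\bm r_b,\partial_w\bm r_b]^{\!\top}$ already has rank two, and your Lagrange-identity remark settles the ``i.e.''. For the converse, the paper asserts in one line that $\partial_w\bm r_b=\lambda\,\partial_s\bm r_b$ makes \eqref{eq:phis}--\eqref{eq:phiw} degenerate to $\partial_w\bm\phi=\lambda\,\partial_s\bm\phi$. You are right that this cannot be read off formulas that divide by $\Vert\bm c\Vert$.

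Note also that, with arc length, $\Vert\bm c\Vert^2=(1-\kappa w)^2\bigl(1+(\partial_w z)^2\bigr)+(\partial_s z)^2$. So $\bm c=\bm 0$ exactly when $\partial_s\bm r_b=\bm 0$. At such a point $\bm n_x$ is undefined too, and the yaw term, with its $1/\Vert\partial_s\bm r_b\Vert$, is just as singular as the $1/\Vert\bm c\Vert$ term you single out as dominant.

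The genuine gap is your limiting argument. It cannot be repaired, because the non-vacuous converse it targets is false. Take a flat road, $z\equiv 0$, and let $\bm t=(\partial_s r_x,\partial_s r_y,0)$. Then $\partial_s\bm r_b=(1-\kappa w)\bm t$, $\partial_w\bm r_b=\bm n_{\!g}$ and $\bm c=(1-\kappa w)\bm e_z$. Both $\partial_s\bm c$ and $\partial_w\bm c$ are multiples of $\bm e_z$, so $\bm n_z\times\partial_\xi\bm c/\Vert\bm c\Vert$ vanishes identically. The yaw terms give $\partial_s\bm\phi=\kappa\,\bm e_z$ and $\partial_w\bm\phi=\bm 0$ on both sides of the fold, since the sign of $1-\kappa w$ cancels.

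So nothing blows up. $\mathbf A$ extends smoothly to $w=1/\kappa$, where its rows are $(\bm 0^{\!\top},\kappa\,\bm e_z^{\!\top})$ and $(\bm n_{\!g}^{\!\top},\bm 0^{\!\top})$, which has rank two. Physically, a virtual $\delta s$ at the centre of curvature spins the frame in place.

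Your rescaled matrix $\Vert\bm c\Vert\,\mathbf A$ simply tends to the zero matrix. That shows rank loss of a rescaled limit says nothing about $\mathbf A$ itself. Normalizing each row to unit length instead gives the limits $(\bm 0,\pm\bm e_z)$ and $(\bm n_{\!g},\bm 0)$, still rank two. So no parallelism check will go through.

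The only true form of the converse is your fallback: $\mathbf A$ is defined, and hence can have rank two, only where $\bm c\neq\bm 0$. That statement is vacuous, and it is also all the paper's one-line step can legitimately support. What actually breaks on a flat fold is continuity of the triad ($\bm n_x$ and $\bm n_z$ flip sign), not the rank of $\mathbf A$.
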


\begin{proof}
The first three columns of $\mathbf A$ are
$[\partial_s\bm r_b,\partial_w\bm r_b]^{\!\top}$. If these two vectors
are independent that block already has rank two, so
$\operatorname{rank}\mathbf A=2$. Conversely, suppose
$\partial_w\bm r_b=\lambda\,\partial_s\bm r_b$ for some
$\lambda\in\mathbb R$. Then $\bm c=\bm 0$, the triad is undefined, and
by \eqref{eq:phis}--\eqref{eq:phiw} the rate vectors degenerate to
$\partial_w\bm\phi=\lambda\,\partial_s\bm\phi$ as well, so the second
row of $\mathbf A$ is $\lambda$ times the first and
$\operatorname{rank}\mathbf A\le1$.
\end{proof}

By \eqref{eq:jac} the condition in Proposition~\ref{prop:rank} fails
exactly on the fold \eqref{eq:fold}. So the Frenet fold reported in
\cite{Verulkar:2026fit} is more than an awkward piece of geometry. It is
the configuration in which the joint's constraint Jacobian loses rank and
the DAE can no longer be solved. Condition \eqref{eq:injectivity} on the
input splines is what keeps $\mathbf A$ at full rank across the whole
drivable strip.

\subsection{Compact form of the implementation}

Listing~\ref{lst:trackjoint} reproduces the equation block of the
Modelica component. It evaluates the spline channels at the current
$s$, expands the Monge cubic in Horner form, assembles the surface
partials~\eqref{eq:xPs}--\eqref{eq:xPw} and their parametric
derivatives, builds the triad and the parametric angular velocity
vectors~\eqref{eq:phis}--\eqref{eq:phiw}, and asserts the
constraints~\eqref{eq:dAlembertS}--\eqref{eq:dAlembertW}.

\begin{lstlisting}[caption={Equation block of \texttt{TrackJoint}.
The reference line and height channels are evaluated by a piecewise
spline function bound to the current segment index \texttt{ix}. The
Monge cubic and its partial derivatives are computed in Horner
form. The surface triad and the parametric rotation-rate vectors
follow directly, and the d'Alembert relations close the 2-DOF DAE
without Lagrange multipliers.},label=lst:trackjoint]

  // Height z(s,w) in Horner form and partial derivatives
  z = rz + wTrack*(z1 + wTrack*(z2 + wTrack*z3));
  dzds = drzds + wTrack*(dz1ds + wTrack*(dz2ds + wTrack*dz3ds));
  dzdw = z1 + wTrack*(2*z2 + wTrack*(3*z3));
  d2zds2 = d2rzds2 + wTrack*(d2z1ds2 + wTrack*(d2z2ds2 + wTrack*d2z3ds2));
  d2zdsdw = dz1ds + wTrack*(2*dz2ds + wTrack*(3*dz3ds));
  d2zdw2 = 2*z2 + wTrack*(6*z3);

  // Surface position and partial derivatives
  rTrack = {rx - wTrack*dryds, ry + wTrack*drxds, z};
  drds = {drxds - wTrack*d2ryds2, dryds + wTrack*d2rxds2, dzds};
  drdw = {-dryds, drxds, dzdw};
  d2rds2 = {d2rxds2 - wTrack*d3ryds3, d2ryds2 + wTrack*d3rxds3, d2zds2};
  d2rdsdw = {-d2ryds2, d2rxds2, d2zdsdw};
  d2rdw2  = {0, 0, d2zdw2};

  // Track triad: tangent, normal, binormal
  nxTrack = normalize(drds);
  nzTrack = normalize(cross(drds, drdw));
  nyTrack = cross(nzTrack, nxTrack);

  // Parametric angular-rate vectors (compact triad transport)
  cTrack = cross(drds, drdw);
  dcTrackds = cross(d2rds2,  drdw) + cross(drds, d2rdsdw);
  dcTrackdw = cross(d2rdsdw, drdw) + cross(drds, d2rdw2);
  dphids = cross(nzTrack, dcTrackds)/length(cTrack) + ((d2rds2 *nyTrack)/length(drds))*nzTrack;
  dphidw = cross(nzTrack, dcTrackdw)/length(cTrack) + ((d2rdsdw*nyTrack)/length(drds))*nzTrack;

  // Frame_b angular velocity from parametric rates
  derPhiTrack = dphids*vsTrack + dphidw*vwTrack;

  // D'Alembert: virtual work in s and w directions
  0 = fFrameB*drds + tFrameB*dphids;
  0 = fFrameB*drdw + tFrameB*dphidw;

  // Frame_b: position, rotation matrix, angular velocity
  frame_b.r_0 = rTrack;
  frame_b.R.T = transpose([nxTrack, nyTrack, nzTrack]);
  frame_b.R.w = frame_b.R.T*derPhiTrack;
\end{lstlisting}

Several design choices in the listing are worth noting.
\begin{itemize}
\item \textbf{Ground-plane lateral direction.} The lateral offset
      direction is the ground-plane normal
      $\{-\partial_s r_y,\partial_s r_x,0\}$ to the reference-line tangent, so the
      orientation of \texttt{frame\_b} is determined entirely by
      the spline first and second $s$-derivatives. No additional
      yaw spline is needed and there is no quaternion or
      Euler-angle integration in the joint.
\item \textbf{Horner evaluation.} The Monge cubic and its $s$- and
      $w$-partials are evaluated in Horner form. Horner's method is the
      backward-stable way to evaluate a polynomial, so it keeps the
      relative error small in the high-order derivatives that feed the
      surface triad and the acceleration-level constraint. It is not the
      cheapest form here, though. The Modelica tool expands integer powers
      and then shares common subexpressions across the local coordinate
      that all channels have in common, so an explicitly expanded
      polynomial compiles to faster code. Section~\ref{sec:benchmarks}
      measures both effects. We keep Horner form for the conditioning of
      the derivative chain, not for speed.
\item \textbf{D'Alembert constraint.} The two generalized forces
      in the $(s,w)$ directions are set to zero by projecting the
      resolved body-frame wrench $(\bm f^b,\bm\tau^b)$ onto the
      virtual displacements $\partial_s\bm r_b$ and $\partial_w\bm r_b$. This
      is equivalent to the principle of virtual work and closes
      the 2-DOF DAE in the form expected by a Modelica DAE solver
      without introducing auxiliary multiplier variables.
\item \textbf{$C^4$ continuity requirement.} The DAE solver
      differentiates the angular-velocity equation to form the
      acceleration-level constraint, which requires the second
      $s$-derivatives of every spline channel and the first mixed
      $sw$-derivatives. These exist because the export step of the
      companion fitting pipeline \cite{Verulkar:2026fit} enforces
      $C^4$ continuity at every interior knot and at the periodic
      wrap.
\end{itemize}

\section{Implementation and FMI export}\label{sec:implementation}

\subsection{Parameter record and spline storage}

All road geometry enters \texttt{TrackJoint} through a single
parameter record. Each of the six channels
$(r_x,r_y,r_z,z_1,z_2,z_3)$ is stored as a coefficient matrix of
size $N_{\mathrm{seg}}\times(d{+}1)$ with $d=5$, plus a single
break vector $\{s_i\}_{i=0}^{N_{\mathrm{seg}}}$ shared by all
channels. The break vector is uniform in $s$ in the present
implementation, which lets the break segment index $\mathrm{ix}$ lookup at a query $s$ be
done in constant time using a custom Modelica function. For nonuniform breaks,
the same lookup can be replaced by a binary search without any change to the rest of the joint
equations. Once $\mathrm{ix}$ is known, every spline evaluation is
\begin{equation}
y^{(q)}(s)
 = \sum_{j=q}^{d}
   \frac{j!}{(j-q)!}\,a_{\mathrm{ix},j}\,(s-s_{\mathrm{ix}})^{\,j-q},
\end{equation}
performed in Horner form through a custom Modelica function, with the requested derivative
order $q\in\{0,1,2,3\}$ passed as an argument. The function is
declared locally with the common segment index $\mathrm{ix}$, $s$, the polynomial degree and the
break vector partially applied, so each per-channel evaluation call only
selects its coefficient matrix and its derivative order.

\subsection{State selection and initialization}

The joint declares the four state variables
$(s,\dot s,w,\dot w)$ with \texttt{StateSelect.always} so that a
Modelica compiler chooses them as continuous
states regardless of the surrounding vehicle topology, eliminating
state-aliasing failures observed when the chassis floor was
otherwise free in $\bm e_z$. Initial values are exposed as
parameters and propagated to the \texttt{start} attribute of the 
corresponding states. A boolean override is provided to relax the 
state selection in cases where the user prefers the tool to choose.

\subsection{MultiBody integration}

{\sloppy
The joint exposes a single MultiBody \texttt{Frame\_b} connector
(from \texttt{Modelica.Mechanics.\allowbreak MultiBody.\allowbreak Interfaces}).
A \texttt{Connections.\allowbreak root(frame\_b.R)} statement
establishes the joint as the root of the orientation graph of the
kinematic subtree it heads, mirroring the pattern used by the
standard MultiBody joints. The body-frame wrench arriving on the
connector is rotated to world orientation by the inverse of the
triad,
\par}
\begin{lstlisting}
fFrameB = resolve1(frame_b.R, frame_b.f);
tFrameB = resolve1(frame_b.R, frame_b.t);
\end{lstlisting}
and used in the d'Alembert
relations~\eqref{eq:dAlembertS}--\eqref{eq:dAlembertW}.

In a complete vehicle assembly the joint never carries the chassis
on its own. The \texttt{frame\_b} of \texttt{TrackJoint} is
followed by a standard
\texttt{Modelica.\allowbreak Mechanics.\allowbreak MultiBody.\allowbreak Joints.\allowbreak Prismatic} heave joint
oriented along the surface \emph{binormal} $\bm n_z$, and by three
revolute joints that realize roll, pitch and yaw of the chassis
\emph{relative to the track triad} rather than relative to the
world frame. Note that $\bm n_z$ is the vertical, out-of-plane direction of
the patch. It is not the in-surface lateral normal $\bm n_y$, along which
the joint coordinate $w$ is already measured. The chassis therefore retains
its full six degrees of freedom, but each one is naturally aligned with the
road: the heave coordinate is the vertical ride height above the patch
along $\bm n_z$, and the three angles are the chassis attitude
as a driver would describe it. This decomposition is what enables
the initialization strategy discussed in Section~\ref{sec:results}.

\subsection{FMI 3.0 export}\label{sec:fmiexport}

In use, the joint is connected inside Dymola to a Modelica
MultiBody vehicle model and the closed system is exported as a
single FMI~3.0 model exchange FMU. The joint
contributes the four continuous states $(s,\dot s,w,\dot w)$ to
the exported state vector; the heave prismatic along the surface
binormal $\bm n_z$ (vertical, out-of-plane) and the three
track-relative roll/pitch/yaw revolute joints stacked on top of
\texttt{frame\_b} contribute a further four position states and
their derivatives, so that the chassis exposes its six degrees
of freedom relative to the road surface. The remaining states,
inputs and outputs of the FMU are those of the host vehicle
model. The
spline coefficient matrices of the joint appear as
\texttt{tunable} parameters of the combined FMU so that a single
FMU binary can be re-targeted to a new circuit at run-time
without re-compilation. This is the form consumed by the FMIOPT
direct-collocation transcription described in
Section~\ref{sec:results}.

\section{Results}\label{sec:results}

We illustrate the use of \texttt{TrackJoint} in two settings.

\subsection{Time-domain simulation}

A Modelica experiment is constructed as shown in Figure~\ref{fig:dymola} 
in order to validate the track surface produced by \texttt{TrackJoint}.
In this experiment, a point-mass is driven around the Singapore Grand Prix Circuit 
along the minimum curvature line from the companion pipeline \cite{Verulkar:2026fit} 
with the joint as the only constraint. The resultant road surface is compared against 
the meshed terrain from the LiDAR point cloud of the Singapore Grand Prix Circuit using rFPro Terrain Server.

\begin{figure}[t]
  \centering
  \includegraphics[width=0.48\textwidth]{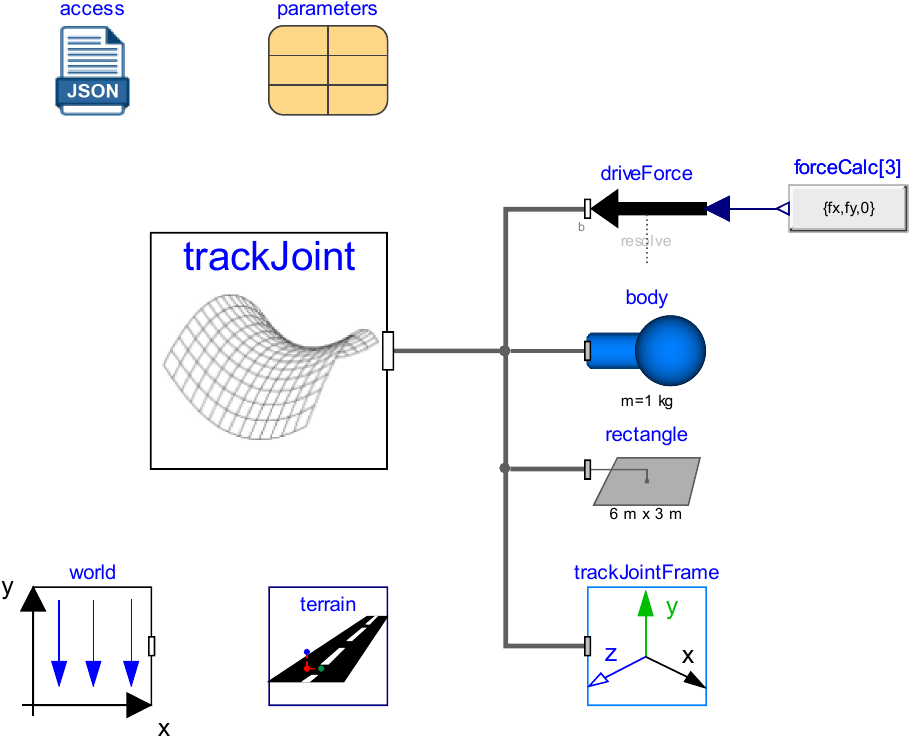}
  \caption{Modelica validation experiment of a point mass 
  driven around the track using \texttt{TrackJoint}.}
  \label{fig:dymola}
\end{figure}

\begin{figure}[t]
  \centering
  \includegraphics[width=0.48\textwidth]{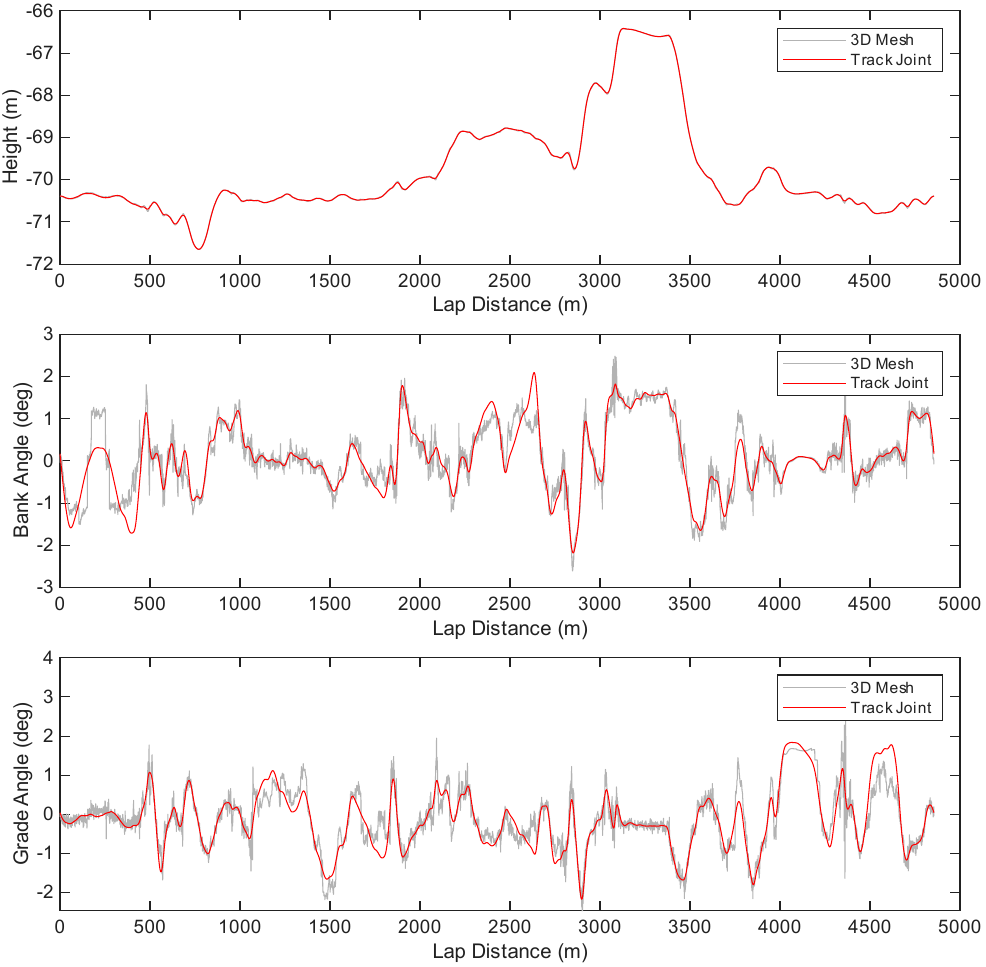}
  \caption{Validation of the \texttt{TrackJoint} surface output
  against the measured track geometry of the Singapore Grand Prix Circuit from LiDAR point cloud.}
  \label{fig:validation}
\end{figure}

\begin{figure}[t]
  \centering
  \includegraphics[width=0.48\textwidth]{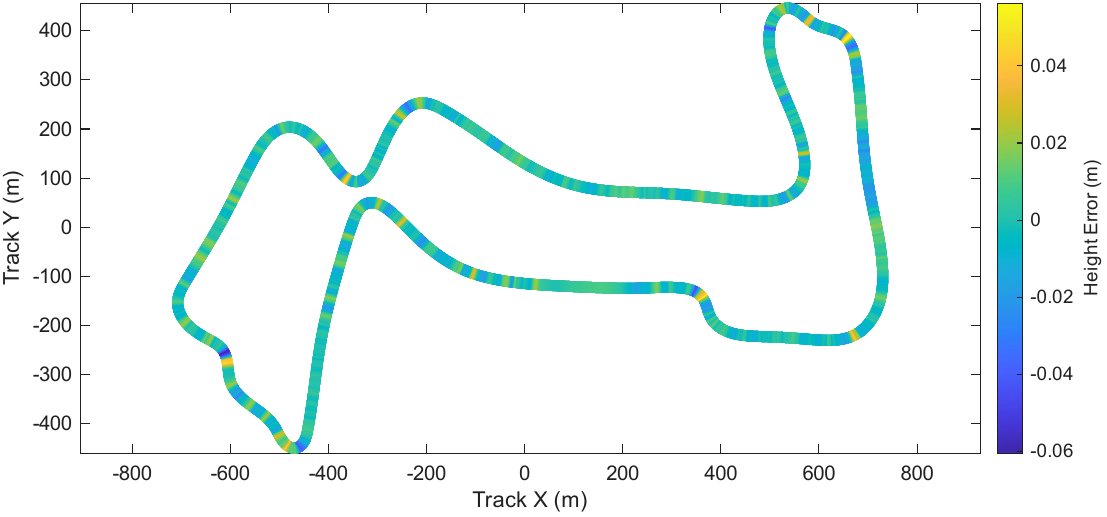}
  \caption{Path of the point mass driven through the \texttt{TrackJoint} and 
  the error in absolute height against the 3D mesh track surface of the 
  Singapore Grand Prix Circuit constructed from LiDAR point cloud.}
  \label{fig:singapore}
\end{figure}

Figures~\ref{fig:validation} and~\ref{fig:singapore} show the simulation results from Dymola 2026x Refresh 1,
which compare the reconstructed track surface geometry produced by the joint with the 3D mesh road geometry
from the LiDAR point cloud. The reconstructed surface at $\bm r_b(s,w)$ closely tracks the 3D mesh road geometry
in absolute height, grade and bank angles across the lap, which confirms that the joint faithfully reproduces
the measured road surface in time-domain simulation.

\subsection{Minimum lap-time direct collocation}

\begin{figure}[t]
  \centering
  \includegraphics[width=0.48\textwidth]{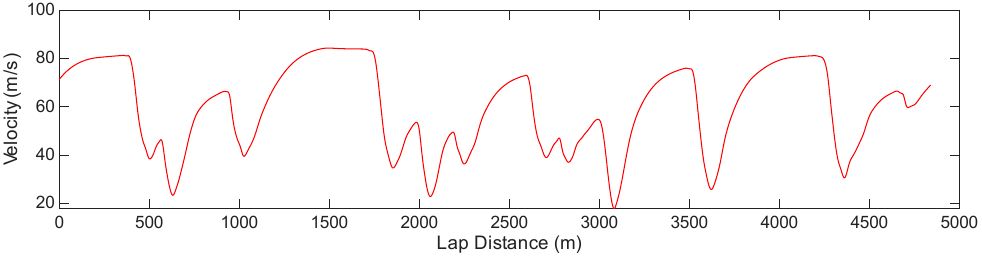}
  \caption{Optimized velocity profile of a multibody vehicle model driven around the Singapore Grand Prix Circuit using \texttt{TrackJoint} in a FMIOPT minimum lap-time problem.}
  \label{fig:velocity}
\end{figure}

\begin{figure}[t]
  \centering
  \includegraphics[width=0.48\textwidth]{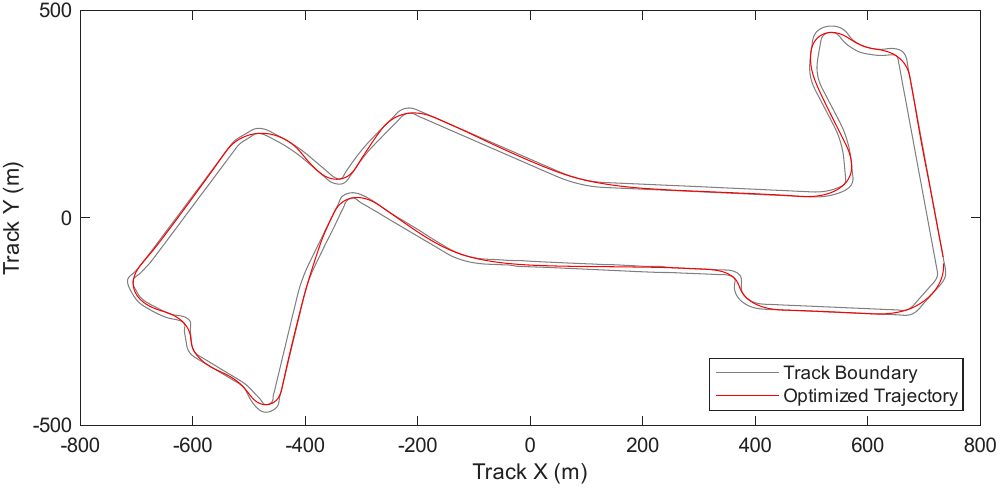}
  \caption{Optimized racing line of a multibody vehicle model driven around the Singapore Grand Prix Circuit using \texttt{TrackJoint} in a FMIOPT minimum lap-time problem.}
  \label{fig:racingline}
\end{figure}

The multibody vehicle-plus-track-joint FMU described in
Section~\ref{sec:fmiexport} is then handed to the FMIOPT
toolchain, which wraps it inside a CasADi
direct-collocation NLP~\cite{Andersson:2019} solved by
IPOPT~\cite{WaechterBiegler:2006} with the MUMPS linear solver.
Because the joint exposes the parametric Jacobian factor through
$\partial_s\bm r_b$, the standard time-to-space
transformation~\eqref{eq:dtds}
is built directly from the splines without any further model
modification.
On every circuit on which the splines are anchored
on the minimum-curvature line of \cite{Verulkar:2026fit}, the
denominator of \eqref{eq:dtds} stays bounded away from zero on the
full track strip and the NLP converges to the IPOPT
tolerances; on circuits on which the splines are anchored on the
geometric center-line, the same NLP exhibits the convergence
failure documented in the companion paper. This is what
Proposition~\ref{prop:rank} means in practice. The denominator of
\eqref{eq:dtds} is the same factor $1-\kappa w$ that sets the rank of the
constraint Jacobian \eqref{eq:Amat}, so a reference line that breaks
\eqref{eq:injectivity} hurts the transcription and the joint at the same
time. Section~\ref{sec:benchmarks} gives the margin for all 23 circuits
and the solve statistics for Singapore.

A second, equally important role of \texttt{TrackJoint} in the
direct-collocation pipeline is the \emph{initialization} of the
NLP. Because the chassis degrees of freedom are expressed
relative to the track triad through the heave prismatic and the
three track-relative revolute joints stacked on top of
\texttt{frame\_b}, a feasible initial guess for an entire lap is
produced by setting every chassis coordinate (heave, roll, pitch,
yaw, lateral offset $w$) and every velocity except $\dot s$ to
zero, and prescribing $s(\tau)$ as a linear ramp from $0$ to $L$
over the normalized time horizon $\tau\in[0,1]$. This guess is
trivial to generate, requires no preliminary simulation and no
knowledge of the circuit beyond its length, yet it places the
vehicle exactly on the road surface with its body axes aligned
to the track triad at every collocation point. For Singapore the guess is
$324$ copies of the same zero vector for every chassis coordinate together
with a linear ramp $s(\tau)=\tau L$, and from that IPOPT reaches
\texttt{Optimal Solution Found} in $264$ iterations
(Table~\ref{tab:nlp}). It works because the track-relative decomposition
makes the guess satisfy the road constraint by construction. With all
chassis coordinates at zero the vehicle sits on $\bm r_b(s,0)$ with
$\mathbf R_b$ equal to the surface triad, so the two d'Alembert
relations already hold at the first iterate and only the vehicle
dynamics are left to resolve. In our experience
this is what allows IPOPT to take its first steps without
running into the orientation-singular regions that plague
world-frame chassis parametrizations, and it is the practical
reason why the same NLP converges on every circuit fitted by the
companion pipeline.

Figures~\ref{fig:velocity} and~\ref{fig:racingline} show the result
of a minimum lap-time optimization problem of a full-detail
multibody vehicle model driven around the Singapore Grand Prix
Circuit through \texttt{TrackJoint}. Figure~\ref{fig:racingline}
plots the optimized racing line on the track strip generated by the
joint splines, and Figure~\ref{fig:velocity} the corresponding
optimized velocity profile along the lap. Both quantities are
produced in a single converged solve of the FMIOPT
direct-collocation NLP, illustrating that the \texttt{TrackJoint}
component yields a well-conditioned NLP suitable for full-circuit
minimum lap-time optimization of a detailed multibody vehicle.

\subsection{Quantitative benchmarks}\label{sec:benchmarks}

All measurements below were taken on one desktop workstation with
Dymola 2026x Refresh~1. Timings are medians of five repetitions.

\paragraph{Circuit coverage.}
The joint has been run on the 23 Formula~1 circuits of the
2026 calendar. Every circuit is fitted with a uniform $5$\,m knot
spacing, which gives $657$--$1387$ quintic segments for track lengths of
$3283$--$6934$\,m. A complete spline parameter record is about
$1.2$\,MB of JSON. Table~\ref{tab:circuits} lists the extremes. The
injectivity margin \eqref{eq:injectivity} is above one on \emph{all} 23
circuits. The tightest is $1.03$ at Monaco, where the reference line comes
down to a $9.3$\,m radius. The set covers both very tight corners (Monaco,
Singapore) and steep camber (Zandvoort, up to $16.2^\circ$ of bank).

\begin{table}[t]
\centering
\caption{Geometry of the fitted circuits: track length $L$, number
of quintic segments, minimum reference-line radius, maximum bank
angle and injectivity margin $\min_s 1/(|\kappa| b)$. Extremes of the
23-circuit set.}
\label{tab:circuits}
\footnotesize
\begin{tabular}{lrrrrr}
\hline
Circuit & $L$\,[m] & $N_\mathrm{seg}$ & $R_\mathrm{min}$\,[m]
        & bank\,[$^\circ$] & margin \\
\hline
Monaco       & 3283 &  657 &  9.3 &  5.2 & 1.03 \\
Zandvoort    & 4185 &  837 & 31.2 & 16.2 & 2.32 \\
Singapore    & 4858 &  972 & 23.0 &  2.1 & 1.56 \\
Spa          & 6934 & 1387 & 22.5 &  4.6 & 1.42 \\
Silverstone  & 5798 & 1160 & 41.2 &  1.9 & 2.38 \\
\hline
all 23       & 3283 &  657 &  9.3 &  1.5 & \textbf{1.03} \\
(range)      & 6934 & 1387 & 41.2 & 16.2 & 2.89 \\
\hline
\end{tabular}
\end{table}

\paragraph{Per-step evaluation cost.}
To measure the cost of the joint on its own, a unit point mass was
driven one full lap through it with a fixed-step fourth-order
Runge--Kutta integrator over $180{,}000$ steps, so that every variant makes
exactly $720{,}000$ model evaluations. The released Horner version costs
$3.54$\,s of integration CPU time, which is $4.9\,\mu$s per model
evaluation for the whole 2-DOF joint including the triad and both
rotation-rate vectors. Writing every polynomial out in expanded power form,
and changing nothing else, gives bit-identical trajectories for $2.63$\,s,
or $26\%$ less. The expanded form wins because Dymola turns integer powers
into repeated multiplications and then shares $x^2,\dots,x^5$ across all
eight channels and all four derivative orders. The Horner recursion instead
gives each of the $26$ evaluations its own chain of dependent
multiply--adds. We keep Horner form for the accuracy reason below, not for
speed.

\paragraph{Evaluation accuracy.}
We evaluated both forms in double precision at every segment of the
Singapore fit and compared them against a 60-digit reference evaluation of
the same polynomial. Over $972$ segments, all six channels and derivative
orders $0$--$3$, the largest relative error is $2.6\times10^{-12}$ for
Horner and $5.0\times10^{-12}$ for the expanded form. So Horner is at best
a factor $1.9$ more accurate, and on some channels the order reverses.
Both sit far below any tolerance that matters physically at this knot
spacing. The difference only starts to count for much coarser knots.

\paragraph{FMU size and NLP performance.}
The vehicle-plus-joint model exports to an FMI~3.0 model-exchange
FMU with a $35.5$\,MB shared library ($10.9$\,MB archived). On the
Singapore circuit the transcription uses $323$ intervals
($324$ collocation points) and the solve is warm-started only by
the trivial guess described above. Table~\ref{tab:nlp} reports the
back-to-back result for the two FMUs, which differ \emph{only} in
the body of the spline evaluation function; both consumed
byte-identical problem data. Both reach
\texttt{Optimal Solution Found}, and the lap times agree to
$3.3$\,ms out of $90.6$\,s, a relative difference of
$4\times10^{-5}$. The iteration counts differ by $6\%$. We read that as the
sensitivity of a nonconvex interior-point path to perturbations of order
$10^{-12}$, not as a real advantage either way. On this evidence the choice
of evaluation form makes no measurable difference to convergence.

\begin{table}[t]
\centering
\caption{Singapore minimum lap-time NLP with the two FMU builds,
which differ only in the spline evaluation function. Identical track,
setup, guess, mesh and solver options.}
\label{tab:nlp}
\footnotesize
\begin{tabular}{lrr}
\hline
 & Horner & expanded \\
\hline
IPOPT status            & optimal & optimal \\
Iterations              & 264 & 249 \\
IPOPT time [s]          & 464.4 & 442.5 \\
Final NLP error         & $8.0\times10^{-4}$ & $3.8\times10^{-4}$ \\
Lap time [s]            & 90.6462 & 90.6429 \\
\hline
\end{tabular}
\end{table}

The FMU, not the linear algebra, dominates the cost of a solve. Of the
$464$\,s total, $167$\,s go into $266$ objective-gradient evaluations
($627$\,ms each), $120$\,s into $389$ constraint evaluations ($310$\,ms
each) and $23$\,s into $266$ constraint-Jacobian evaluations ($85$\,ms
each). Making the road model cheaper to evaluate therefore pays off
directly, which is why the $26\%$ figure above is worth having.

\paragraph{Comparison with the Claytex \texttt{Rail} joint.}
Table~\ref{tab:rail} sets the present component beside the
1-DOF rail joint \cite{Briant:2021}. The differences are
about what the two components can represent.
\texttt{Rail} holds a body on a curve and has to be told its orientation,
either by tying it to the tangent or through three extra angle tables.
\texttt{TrackJoint} holds a body on a surface and works the whole
orientation triad out from the surface partials. Timing the two against
each other would not mean much, since they solve different problems. The
point is that the second degree of freedom, the analytic triad and the
second d'Alembert relation all come from six scalar splines and no extra
tabulated data.

\begin{table}[t]
\centering
\caption{Claytex \texttt{Rail} versus \texttt{TrackJoint}.}
\label{tab:rail}
\footnotesize
\begin{tabular}{lll}
\hline
 & \texttt{Rail} & \texttt{TrackJoint} \\
\hline
Free DOF      & 1 ($s$)        & 2 ($s,w$) \\
Geometry      & space curve    & Monge-patch surface \\
Orientation   & rail tangent or & analytic triad from \\
              & 3 angle tables  & surface partials \\
Input data    & $xyz$ or $xyz\phi$ & 6 quintic splines \\
              & sample table    & $+$ shared knots \\
Constraint    & 1 projection   & 2 d'Alembert relations \\
Camber/crown  & Constant (centerline) & $z(s,w)$ polynomial \\
\hline
\end{tabular}
\end{table}

\section{Limitations}\label{sec:limitations}

There are limits to what the component does today, and to what the
measurements above can show.

\emph{Dependence on the fitting pipeline.} The joint uses splines it does
not produce, and it cannot tell at run time that a reference line was
badly anchored. If \eqref{eq:injectivity} is broken, the constraint
Jacobian quietly loses rank in the offending corner. A cheap guard would be
to export $\min_s 1/(|\kappa| b)$ with the splines and assert on it at
initialization. The present version does not do this.

\emph{Validation scope.} We validated the surface against LiDAR terrain for
one circuit and one point-mass run, and the optimization results use one
vehicle, one setup and one initial guess per circuit. We have not run a
spline-resolution sensitivity study or a multi-seed convergence study, so
our comments about robustness are experience rather than measurement.

\emph{Availability.} \texttt{TrackJoint} is proprietary. We give the
formulation in full here so that it can be reimplemented, but the
component, the vehicle models and the fitted circuit data are not public.

\emph{Implementation restrictions.} The break vector is uniform, so
non-uniform knots would need a search in place of the constant-time index
lookup. The joint assumes a closed circuit and reduces $s$ modulo the lap
length. It exposes one \texttt{frame\_b}, so four-wheel contact-patch
queries need four instances sharing one parameter record.

\section{Conclusions}

Packaging a racetrack surface as an ordinary MultiBody joint works well in
Modelica. \texttt{TrackJoint} exposes only a \texttt{frame\_b} connector
and two joint coordinates, so it plugs into the standard library like any
elementary joint, and the whole road model comes down to six scalar splines
and a shared knot vector.

Three things came out of this work. First, the regularity of the patch is
not a modelling detail. It is the condition under which the joint can be
solved at all: Proposition~\ref{prop:rank} shows the constraint Jacobian
loses rank exactly on the Frenet fold, so $\min_s 1/(|\kappa| b)>1$ on the
input splines is what makes the DAE well posed. All 23 circuits fitted here
meet it, the tightest being $1.03$ at Monaco. Second, writing the chassis
coordinates relative to the track triad makes the minimum lap-time problem
easy to start. Setting every chassis coordinate to zero and ramping $s$
puts the vehicle on the road with its axes aligned at every collocation
point, with no preliminary simulation. Third, about two thirds of a
$464$\,s Singapore solve happens inside the FMU. The $26\%$ gap we measured
between two mathematically identical polynomial forms is a reminder that
what counts is the generated code, not the operation count on paper.

The next steps come straight out of Section~\ref{sec:limitations}:
assert the injectivity margin at initialization, find out what the cubic
lateral term buys on a circuit that needs it, swap the uniform-knot storage
for an adaptive one, and build a contact-patch variant in which four
instances share one parameter record.

\section*{Acknowledgements}

The component builds on the geometric formulation of
\textcite{LimebeerVanDeMerwe:2025}, with input splines from the
companion fitting work \cite{Verulkar:2026fit}. We gratefully
acknowledge the Modelica Standard Library MultiBody package,
CasADi~\cite{Andersson:2019} and IPOPT~\cite{WaechterBiegler:2006}.

\printbibliography

\end{document}